\title[Certifying Incremental Quadratic Constraints for Neural Networks]{Certifying Incremental Quadratic Constraints for Neural Networks via Convex Optimization}
\def\bbeta{\boldsymbol{\beta}}
\def\balpha{\boldsymbol{\alpha}}
\newcommand{\bx}{\mathbf{x}}
\newcommand{\by}{\mathbf{y}}
\DeclareMathOperator*{\argmax}{arg\,max}
\author{%
	\Name{Navid Hashemi} \Email{Navid.Hashemi@utdallas.edu}\\
	\addr Mechanical Engineering, University of Texas at Dallas%
	\AND
	\Name{Justin Ruths} \Email{jruths@utdallas.edu}\\
	\addr Mechanical Engineering, University of Texas at Dallas%
	\AND
	\Name{Mahyar Fazlyab} \Email{mahyarfazlyab@jhu.edu} \\
	\addr Mathematical Institute for Data Science, Johns Hopkins University%
}
\begin{document}
	
	\maketitle
	
	\begin{abstract}%
		Abstracting neural networks with constraints they impose on their inputs and outputs can be very useful in the analysis of neural network classifiers and to derive optimization-based algorithms for certification of stability and robustness of feedback systems involving neural networks. In this paper, we propose a convex program, in the form of a Linear Matrix Inequality (LMI), to certify incremental quadratic constraints on the map of neural networks over a region of interest. These certificates can capture several useful properties such as (local) Lipschitz continuity, one-sided Lipschitz continuity, invertibility, and contraction. We illustrate the utility of our approach in two different settings. First, we develop a semidefinite program to compute guaranteed and sharp upper bounds on the local Lipschitz constant of neural networks and illustrate the results on random networks as well as networks trained on MNIST. Second, we consider a linear time-invariant system in feedback with an approximate model predictive controller parameterized by a neural network. We then turn the stability analysis into a semidefinite feasibility program and estimate an ellipsoidal invariant set for the closed-loop system.
	\end{abstract}
	
	\begin{keywords}%
		Neural Networks, Convex Optimization, Linear Matrix Inequalities, Semidefinite Programming
	\end{keywords}
	
	\section{Introduction}
	
	Due to their ability to capture complex dependencies, Deep Neural Networks (DNNs) have been tremendously successful at various learning tasks such as image classification and learning-based control. Despite this success, the complex structure of neural networks makes them hard to analyze and therefore, they are often used without formal guarantees. For instance, in response to the fragility of neural networks to uncertainties and adversarial attacks, there has been a growing interest in defining an appropriate notion of robustness and building defenses to improve it. Among several measures of robustness is the Lipschitz constant of neural networks, which by definition quantify the sensitivity of the output of the neural network to input perturbations. Knowing this constant is instrumental in several applications, such as robustness certification of classifiers \cite{weng2018evaluating}, stability and safety analysis of deep reinforcement learning controllers, deriving generalization bounds \cite{bartlett2017spectrally,bolcskei2019optimal,sokolic2017robust,neyshabur2017exploring}, and perception-based robust control \cite{dean2019robust}. However, an accurate estimation of this constant can be quite challenging and has spurred significant interest recently.
	
	
	
	More generally, describing neural networks with constraints they impose on their inputs and outputs (e.g., Lipschitz continuity) can be very useful in the analysis of neural networks and to derive optimization-based algorithms for certification of stability and robustness of neural-network-driven feedback systems. In particular, quadratic constraints can be naturally incorporated into existing methods for analysis and design of feedback systems via matrix inequalities \cite{boyd1994linear}.  Motivated by this vision, in this paper we propose a convex program, in the form of an LMI, to certify a class of quadratic constraints on the map of neural networks over a region of interest. These certificates can capture several useful properties such as local and or one-sided Lipschitz continuity, invertibility, contraction, etc.  We illustrate the utility of our approach in two different settings. First, we develop a semidefinite program (SDP) to compute guaranteed and sharp upper bounds on the local Lipschitz constant of neural networks and illustrate the results on random networks as well as networks trained on MNIST. Comparisons with the existing methods reveal that our method is more accurate and more scalable at the same time. Second, we consider a linear time-invariant system in feedback with a neural network that approximates an explicit model predictive control law and turn the stability analysis into an SDP. More specifically, we compute an ellipsoidal invariant set around the equilibrium point of the closed-loop system.
	\footnote{The code is available at \url{https://github.com/mahyarfazlyab/LipSDP-Local}.}

	\subsection{Related Work}
	
	The use of quadratic constraints has a rich history in robust control and leveraged as a tool to abstract nonlinearities, time variations,  unmodeled dynamics, and uncertain parameters by the constraints they impose on their inputs and outputs \cite{yakubovich1992nonconvex,megretski1997system,zames1966input}. Recently, quadratic constraints have been used and adapted for 
	safety verification of neural networks \cite{raghunathan2018semidefinite,fazlyab2019safety,fazlyab2019probabilistic},  estimation of their Lipschitz constants \cite{fazlyab2019efficient}, Lipschitz constrained training of neural networks \cite{pauli2021training} and reachability and stability analysis of  feedback systems with neural network controllers \cite{hu2020reach,yin2020stability}. 
	
	
	
	\medskip
	
	\noindent \textbf{Lipschitz constant esimtation of neural networks.} The value of computing the local Lipschitz constant of a neural network is underscored by the variety of techniques that have been developed to approach the problem \cite{weng2018towards,avant2020analytical,weng2018evaluating,virmaux2018lipschitz,fazlyab2018analysis,latorre2020lipschitz}. 
	%
	%
	\cite{weng2018towards} provide upper bounds (FastLip) by propagating interval-bounds using linear approximations of each neuron depending on whether they are active, inactive, or both over the local region. In \cite{fazlyab2019efficient} the authors propose an SDP called LipSDP that computes guaranteed upper bounds on the \emph{global} Lipschitz constant of deep neural networks. 
	%
	%
	%
	%
	%
	\cite{latorre2020lipschitz} proposed a polynomial optimization framework to bound the local Lipschitz constant (LiPopt) for sparse networks that employ smooth activation functions. \cite{chen2020semialgebraic} extended this polynomial optimization approach (LipOpt) to handle ReLU networks by defining generalized derivatives using a set of semialgebraic constraints. \cite{jordan2020exactly} presents a Mixed-Integer Programming formulation (LipMIP) that computes the exact local Lipschitz constant of a ReLU network and provides a direct linear relaxation (LipLP). Scalability of these methods or the structure imposed to provide scalability is a consistent challenge that competes with the conservatism of the local Lipschitz upper bounds - MIP optimizations become intractable quickly; the sparsity required for LiPopt and LipOpt enables reducing the size of the corresponding linear program. Many of these methods exploit specific features of the activation functions (differentiability, piece-wise linearity, etc) or network topologies (sparsity, scalar-valued network), but limit the scope of networks for which they apply. LipSDP admits an approach that is broadly applicable to most neuron activation functions and the semi-definite programs provide a computational tractability that is practical.

	\medskip
	
	\textbf{Notation.} We denote the set of real $n$-dimensional vectors by $\mathbb{R}^n$, the set of $m\times n$-dimensional matrices by $\mathbb{R}^{m\times n}$, the set of $m\times n$-dimensional matrices with non-negative components by $\mathbb{R}_{+}^{m\times n}$, and the $n$-dimensional identity matrix by $I_n$. We denote by $\mathbb{S}^{n}$, $\mathbb{S}_{+}^n$, and $\mathbb{S}_{++}^n$ the sets of $n$-by-$n$ symmetric, positive semidefinite, and positive definite matrices, respectively. We denote the $p$-norm ($p \geq 1$) by $\|\cdot\|_p \colon \mathbb{R}^n \to \mathbb{R}_{+}$.  We denote the quadratic norm, induced by $P \in \mathbb{S}_{++}^n$, with $\|x\|_P = \sqrt{x^\top P x}$. 
	We denote the $i$-th unit vector in $\mathbb{R}^n$ by $e_i$. We write $\mathrm{diag}(a_1, ..., a_n)$ for a diagonal matrix whose diagonal entries starting in the upper left corner are $a_1, \cdots, a_n$. We denote the hadamard product between two matrices $A,B$ by $A \circ B$. For square matrices $A_1,\cdots,A_m$, $\mathrm{blkdiag}(A_1,\cdots,A_m)$ is a square matrix whose main-diagonal blocks are $A_1,\cdots,A_m$ and all off-diagonal blocks are zero matrices.
	
	\section{Problem Statement} \label{sec:problem}
	Consider a function $f \colon \mathbb{R}^{n_x} \to \mathbb{R}^{n_f}$ parameterized by a feed-forward neural network of the form 
	\begin{alignat}{2} \label{eq: nn equations}
		x_0 &=x \quad 
		x_{k+1} = \phi_k(W_k x_k + b_k) \quad k=0,\cdots,\ell-1 \quad
		f(x_0) = W_{\ell} x_{\ell} + b_{\ell},
	\end{alignat}
	where $W_k \in \mathbb{R}^{n_{k+1} \times n_k}, \ b_k \in \mathbb{R}^{n_{k+1}}$ are the weight and bias of the $k$-th layer, $n_0=n_x, \ n_{f}=n_{\ell+1}$ and $\phi_k \colon \mathbb{R}^{n_{k+1}} \to \mathbb{R}^{n_{k+1}}$ is the layer of activation functions. We denote by $n = \sum_{k=1}^{\ell} n_k$ the total number of neurons.  
	%
	%
	Given two closed sets $\mathcal{X}_0,\mathcal{Y}_0  \subset \mathbb{R}^{n_0}$ in the input space and a symmetric and indefinite matrix $Q_f \in \mathbb{S}^{n_x+n_f}$, we would like to verify that
	%
	%
	\begin{align} \label{eq: local lip bounds}
		\begin{bmatrix}
			x_0-y_0 \\ f(x_0)-f(y_0)
		\end{bmatrix}^\top Q_f \begin{bmatrix}
			x_0-y_0 \\ f(x_0)-f(y_0)
		\end{bmatrix} \geq 0 \quad \forall x_0,y_0 \in \mathcal{X}_0 \times \mathcal{Y} _0.
	\end{align}
	In this paper, we develop an LMI whose feasibility leads to the  certificate \eqref{eq: local lip bounds} for a given $Q_f$.
	%
	We now provide two concrete applications in which the inequalities of the form \eqref{eq: local lip bounds} appear explicitly.
	%

	%
	
	\subsection{Robustness Certification of Neural Network Classifiers} \label{sec:robustness-certification}
	
	
	
	Consider that $f$ functions as a $n_f$-class classifier in which the data $x\in\mathcal{X}\subset\mathbb{R}^{n_x}$ is assigned the class label $i^*(x)=\argmax_i f_i(x)$, where $f_i$ is the $i$-th component of $f$. The robustness of $f$ can be quantified through the adversarial (worst-case) perturbation of minimum norm that is able to change the assigned class label of the point $x$, i.e., $\epsilon_p^*(x) =\{ \inf_{\epsilon} \|\epsilon\|_p \text{ s.t. } i^*(x+\epsilon)\neq i^*(x)\}$ \cite{fawzi2016robustness,peck2017lower}. One technique to identify $\epsilon_2^*(x)$ is to identify the largest $\ell_2$ ball in the \emph{output} space centered at $f(x)$ that maintains the same classification with radius
	$\rho=\min_{i\neq i^*} \tfrac{1}{\sqrt{2}}\left|(e_{i^*}-e_i)^\top f(x)\right|$ \cite{fazlyab2019efficient}.
	If $f$ is locally Lipschitz with constant $L_{f}$ (in $\ell_2$ norm), then it is possible project the ball with radius $\rho$ back into the input space using the Lipschitz constant. This provides a lower bound on the adversarial perturbation $\epsilon_2^*(x)\geq \rho/L_f$.
	
	%
	Certifying quadratic inequalities in the form of \eqref{eq: local lip bounds} enables this analysis because the local Lipshitz continuity of $f$ on $\mathcal{C}_0 \subset \mathbb{R}^{n_x}$ with Lipschitz constant $L_{f}$ is equivalent to 
	\begin{align} \label{eq: incremental qc nn}
		\begin{bmatrix}
			x_0-y_0 \\ f(x_0)-f(y_0)
		\end{bmatrix}^\top \begin{bmatrix}
			L_f^2 I_{n_x} & 0 \\ 0 & - I_{n_f}
		\end{bmatrix} \begin{bmatrix}
			x_0-y_0 \\ f(x_0)-f(y_0)
		\end{bmatrix} \geq 0 \quad \forall x_0,y_0 \in \mathcal{C}_0 \times \mathcal{C} _0.
	\end{align}
	
	\subsection{Stability of Neural Network Controlled Systems}
	
	Consider an LTI system in feedback with a neural network controller, $x_{+} = A x + B f(x)$. 
	%
	Suppose $x_{\star} \in \mathbb{R}^{n_x}$ is an equilibrium of the closed loop system, that is $x_{\star} = A x_{\star} + B f(x_{\star})$. By defining a quadratic Lyapunov function $V(x) = (x-x_{\star})^\top P (x-x_{\star})$ with $P \in \mathbb{S}_{++}^{n_x}$ (to be determined), local geometric stability of the closed-loop system on $\mathcal{D} \subset \mathbb{R}^{n_x}$ (which contains $x_\star$) is implied by the condition $V(x_{+}) \leq \rho V(x)$ for all $x \in \mathcal{D}$, where $\rho \in (0,1)$ is the convergence rate \cite{haddad2011nonlinear}.  But this condition can be equivalently expressed as 
	\begin{align} \label{eq: closed loop stability qc}
		\begin{bmatrix}
			x-x_{\star} \\ f(x)-f(x_\star)
		\end{bmatrix}^\top \begin{bmatrix}
			A^\top P A - \rho P & PB \\ B^\top P  & B^\top P B
		\end{bmatrix}\begin{bmatrix}
			x-x_{\star} \\ f(x)-f(x_\star)
		\end{bmatrix} \leq 0 \quad \forall (x,x_\star) \in \mathcal{D} \times \{x_\star\}.
	\end{align}
	Again, this is an instance of \eqref{eq: local lip bounds}.

	
	\section{Canonical Representation of Nonlinearities}

	The building block of our method is local incremental quadratic constraints, or $\delta\mathrm{QC}$ for short, which aim to describe nonlinear functions/operators incrementally with respect to two arbitrary inputs. 
	A formal definition is as follows, which is inspired by the definition in \cite{accikmecse2011observers}.
	\begin{definition}[Local Incremental Quadratic Constraint] \label{def: Incremental}\normalfont Let $\mathcal{X},\mathcal{Y} \subseteq \mathbb{R}^n$ be two closed sets. We say the function $\phi \! \colon \! \mathbb{R}^n \to \mathbb{R}^n$ satisfies the local incremental quadratic constraint defined by $(\mathcal{X},\mathcal{Y},\mathcal{Q})$ if for any $Q \in \mathcal{Q} \subset \mathbb{S}^{2n}$ we have 
		\begin{align}
			\begin{bmatrix}
				x-y \\ \phi(x)-\phi(y)
			\end{bmatrix}^\top Q \begin{bmatrix}
				x-y \\ \phi(x)-\phi(y)
			\end{bmatrix} \geq 0 \quad \forall (x,y) \in \mathcal{X} \times \mathcal{Y}.
		\end{align}	
	\end{definition}
	Note that $\mathcal{Q}$ is a convex set of all matrices that characterize $\phi$ incrementally with respect to two arbitrary points $(x,y) \in \mathcal{X}\times \mathcal{Y}$. Further, if $\phi$ satisfies the $\delta\mathrm{QC}$ defined by $(\mathcal{X},\mathcal{Y},\mathcal{Q})$, it also satisfies the $\delta\mathrm{QC}$ defined by $(\bar{\mathcal{X}},\bar{\mathcal{Y}},\mathcal{Q})$ for any non-empty $\bar{\mathcal{X}} \subseteq \mathcal{X}$ and $\bar{\mathcal{Y}} \subseteq \mathcal{Y}$.
	

	
	In the sequel, we elaborate on characterizing activation layers in neural networks via $\delta\mathrm{QC}$s. For simplicity in the exposition, we consider the simpler case of $\mathcal{X}=\mathcal{Y}$. Extensions to the more general case would be similar and hence, we omit them for the sake of space.
	
	\subsection{Smooth Activation Functions} 
	%
	
	We start with describing a single activation function over a bounded interval by $\delta\mathrm{QC}$s. See Figure \ref{fig:local_iqc_activation} for an illustration.
	\begin{figure}[t]
		\centering
		\includegraphics[width=0.8\columnwidth]{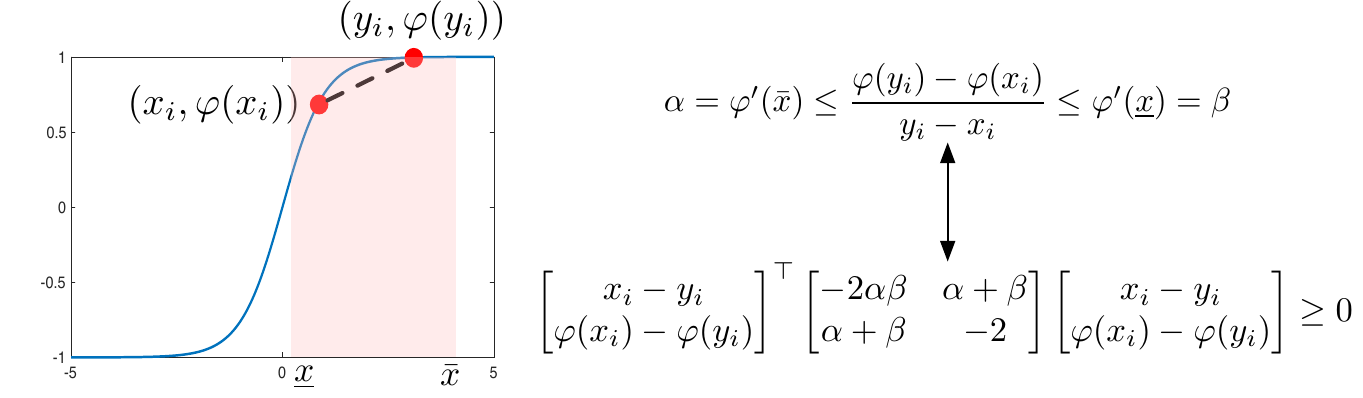}
		\caption{Local incremental quadratic constraint for $\varphi(x)$ on $[\underline{x},\bar{x}]$.}
		\label{fig:local_iqc_activation}
	\end{figure}
	
	\begin{lemma} \label{lemma: local incremental qc single activation}
		Let $\varphi \colon \mathbb{R} \to \mathbb{R}$ be continuous on $\mathcal{X}:=[\underline{x}, \ \bar{x}]$ and  differentiable on $(\underline{x}, \ \bar{x})$. Define $\alpha = \inf_{x \in (\underline{x}, \ \bar{x})} \ \varphi'(x)$ and $\beta = \sup_{x \in (\underline{x}, \ \bar{x})} \ \varphi'(x)$. Then $\varphi$ satisfies the  incremental quadratic constraint defined by $(\mathcal{X},\mathcal{X},\mathcal{Q})$  where
		\begin{align}
			\mathcal{Q} = \{Q \mid Q =   \begin{bmatrix}
				-2\alpha \beta \lambda & (\alpha+\beta) \lambda \\ (\alpha+\beta)\lambda & -2\lambda
			\end{bmatrix}, \ \lambda \geq 0\}.
		\end{align}
		
	\end{lemma}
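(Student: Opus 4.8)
The plan is to collapse the $2\times 2$ matrix inequality into a single scalar inequality and then close it with the mean value theorem. Setting $u = x-y$ and $v = \varphi(x)-\varphi(y)$, I would first expand the quadratic form and observe that it factors:
\begin{align*}
\begin{bmatrix} u \\ v \end{bmatrix}^\top \begin{bmatrix} -2\alpha\beta\lambda & (\alpha+\beta)\lambda \\ (\alpha+\beta)\lambda & -2\lambda \end{bmatrix} \begin{bmatrix} u \\ v \end{bmatrix} = -2\lambda\bigl(v^2 - (\alpha+\beta)uv + \alpha\beta u^2\bigr) = -2\lambda\,(v-\alpha u)(v-\beta u).
\end{align*}
Since $\lambda\ge 0$, proving the lemma then reduces to the scalar claim $(v-\alpha u)(v-\beta u)\le 0$ for all $x,y\in\mathcal{X}$, i.e.\ that the incremental slope is pinned between $\alpha$ and $\beta$.

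To establish this claim I would dispose of the trivial case $x=y$ (where $u=v=0$, so the form is $0$) and, for $x\ne y$, apply the mean value theorem on the subinterval with endpoints $x$ and $y$: this subinterval lies in $[\underline{x},\bar{x}]$ and its interior lies in $(\underline{x},\bar{x})$, so the hypotheses (continuity on the closed interval, differentiability on the open one) yield a point $c\in(\underline{x},\bar{x})$ with $v=\varphi'(c)\,u$. By the definitions of $\alpha$ and $\beta$ we have $\alpha\le\varphi'(c)\le\beta$, hence $v-\alpha u=(\varphi'(c)-\alpha)u$ and $v-\beta u=(\varphi'(c)-\beta)u$ have opposite signs, so $(v-\alpha u)(v-\beta u)=(\varphi'(c)-\alpha)(\varphi'(c)-\beta)\,u^2\le 0$. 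Substituting back into the factorization and using $\lambda\ge 0$ finishes the argument for every $Q\in\mathcal{Q}$.

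I do not expect a genuine obstacle: the computation is short and the analytic content is a one-line application of the mean value theorem. The only point requiring care is ensuring the intermediate point $c$ lands in the \emph{open} interval $(\underline{x},\bar{x})$ over which $\alpha,\beta$ were taken as infimum/supremum of $\varphi'$ — this is exactly why the statement only asks for differentiability on the open interval, and it is what keeps the bound valid even when $x$ or $y$ is an endpoint of $\mathcal{X}$. (One also tacitly assumes $\alpha,\beta$ are finite so that the matrix defining $\mathcal{Q}$ is well posed; if $\varphi'$ were unbounded the claim is vacuous.)
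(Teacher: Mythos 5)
Your proposal is correct and follows essentially the same route as the paper: both expand the quadratic form into the product $-2\lambda(v-\alpha u)(v-\beta u)$ and close the argument with the mean value theorem to pin the incremental slope between $\alpha$ and $\beta$. Your version is slightly more careful (handling $x=y$ explicitly and noting that $c$ lies in the open interval), but the substance is identical.
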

	%
	Next, we extend the previous lemma to multi-variable nonlinearities.
	\begin{lemma} \label{lemma: local incremental qc multiple activation}
		Let $\phi(x) = (\varphi_1(x_1),\cdots,\varphi_n(x_n)), \ x \in \mathcal{X} \subseteq \mathbb{R}^n$, where all $\varphi_i$'s are differentiable. Define $\alpha_i = \inf_{x \in \mathcal{X}} \ \varphi'(x_i)$ and $\beta_i = \sup_{x \in \mathcal{X}} \ \varphi'(x_i)$. Then $\phi$ satisfies the  $\delta\mathrm{QC}$ defined by $(\mathcal{X},\mathcal{X},\mathcal{Q})$, where
		\begin{align}
			\mathcal{Q} = \{Q \mid Q =  \begin{bmatrix}
				-2\operatorname{diag}(\alpha \circ \beta \circ \lambda ) & \operatorname{diag}((\alpha+\beta) \circ \lambda )  \\ \operatorname{diag}((\alpha+\beta) \circ \lambda )  & -2 \operatorname{diag}(\lambda)
			\end{bmatrix}, \ \lambda \in \mathbb{R}^n_{+} \}.
		\end{align}
	\end{lemma}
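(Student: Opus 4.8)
The plan is to obtain the multivariable $\delta\mathrm{QC}$ by applying Lemma~\ref{lemma: local incremental qc single activation} separately to each scalar component $\varphi_i$ and then summing the resulting $n$ inequalities against nonnegative weights $\lambda_i$.

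First I would fix $x,y\in\mathcal{X}$ and, for each $i\in\{1,\dots,n\}$, observe that $x_i$ and $y_i$ both lie in the interval $[\underline{x}_i,\bar{x}_i]$ spanned by the $i$-th coordinates of points of $\mathcal{X}$, on which $\alpha_i\le\varphi_i'\le\beta_i$ by the definition of $\alpha_i,\beta_i$. Lemma~\ref{lemma: local incremental qc single activation} applied to $\varphi_i$ on $[\underline{x}_i,\bar{x}_i]$ then yields, for every $\lambda_i\ge 0$,
\begin{align*}
\begin{bmatrix} x_i-y_i \\ \varphi_i(x_i)-\varphi_i(y_i)\end{bmatrix}^{\top} \begin{bmatrix} -2\alpha_i\beta_i\lambda_i & (\alpha_i+\beta_i)\lambda_i \\ (\alpha_i+\beta_i)\lambda_i & -2\lambda_i\end{bmatrix} \begin{bmatrix} x_i-y_i \\ \varphi_i(x_i)-\varphi_i(y_i)\end{bmatrix}\ \geq\ 0 .
\end{align*}

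Next I would add these $n$ scalar inequalities. Setting $u=x-y$ and $v=\phi(x)-\phi(y)$, so that $v_i=\varphi_i(x_i)-\varphi_i(y_i)$, a direct expansion shows the sum of the left-hand sides equals
\begin{align*}
\sum_{i=1}^{n}\Big(-2\alpha_i\beta_i\lambda_i u_i^2 + 2(\alpha_i+\beta_i)\lambda_i u_i v_i - 2\lambda_i v_i^2\Big) = \begin{bmatrix} u\\ v\end{bmatrix}^{\top} \begin{bmatrix} -2\operatorname{diag}(\alpha\circ\beta\circ\lambda) & \operatorname{diag}((\alpha+\beta)\circ\lambda)\\ \operatorname{diag}((\alpha+\beta)\circ\lambda) & -2\operatorname{diag}(\lambda)\end{bmatrix} \begin{bmatrix} u\\ v\end{bmatrix},
\end{align*}
which is therefore nonnegative. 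Since $x,y\in\mathcal{X}$ and $\lambda\in\mathbb{R}^n_{+}$ were arbitrary, this is exactly the asserted $\delta\mathrm{QC}$ defined by $(\mathcal{X},\mathcal{X},\mathcal{Q})$.

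I do not expect a genuine obstacle: once the scalar lemma is in hand the argument is a weighted sum followed by routine block-matrix bookkeeping. The only point deserving a word of care is the coordinatewise invocation of Lemma~\ref{lemma: local incremental qc single activation} — the mean-value point produced for coordinate $i$ must remain in a range on which $\alpha_i,\beta_i$ are valid derivative bounds; this is automatic because that point lies between $x_i$ and $y_i$, hence in $[\underline{x}_i,\bar{x}_i]$ (the inf/sup defining $\alpha_i,\beta_i$ being taken over precisely that range, e.g.\ when $\mathcal{X}$ is a hyperrectangle). The identical computation goes through when $\mathcal{X}\neq\mathcal{Y}$, taking each coordinate interval over $\mathcal{X}\cup\mathcal{Y}$.
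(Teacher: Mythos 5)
Your proof is correct and follows essentially the same route as the paper: the quadratic form associated with $Q$ decomposes coordinatewise into a weighted sum of the scalar quadratic forms from Lemma~\ref{lemma: local incremental qc single activation}, each of which is nonnegative for $\lambda_i \geq 0$. Your remark about the mean-value point needing to stay in an interval on which $\alpha_i,\beta_i$ bound $\varphi_i'$ is a point the paper passes over silently, and is a worthwhile caveat (it is automatic for hyperrectangular $\mathcal{X}$, which is the case used downstream).
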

	
	\subsection{Piecewise Linear Activation Functions}
	%
	
	Since the Rectified Linear Unit (ReLU) function is not differentiable, the result of Lemma \ref{lemma: local incremental qc single activation} is not directly applicable. Therefore, we characterize (leaky) ReLU functions separately. 
	
	\begin{figure}[t]
		\centering
		\includegraphics[width=0.9\columnwidth]{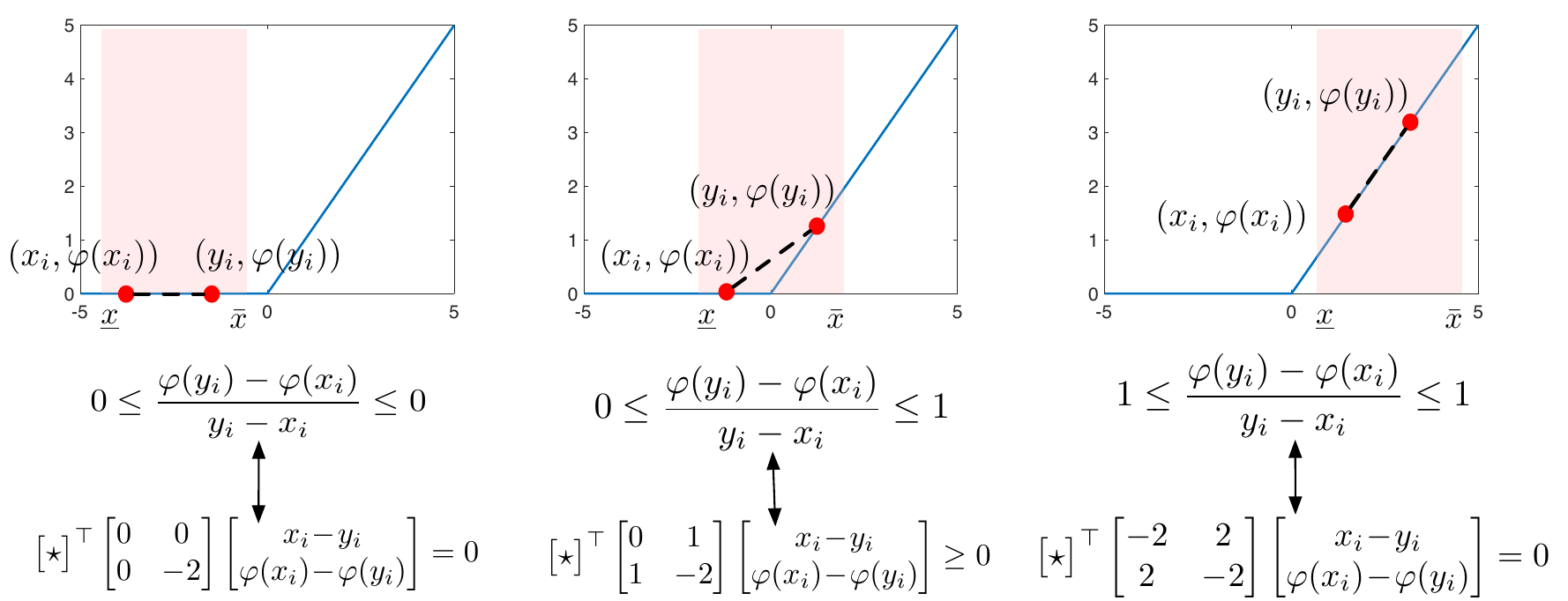}
		\caption{Local incremental quadratic constraint for $\varphi(x)=\max(0,x)$ defined on $[\underline{x},\bar{x}]$.}
		\label{fig:local_iqc_relu}
	\end{figure}
	\begin{lemma}\label{lemma: ReLUQC}
		Let $\phi(x)=\max(\alpha x,\beta x),  \ x \in \mathcal{X} \subseteq \mathbb{R}^n, 0 \leq \alpha \leq \beta <\infty$ and define $\mathcal{I}^{+}$, $\mathcal{I}^{-}$, and $\mathcal{I}^{\pm}$ as the set of activations that are always active, always inactive, or unknown on $\mathcal{X}$, i.e., $\mathcal{I}^{+}= \{i  \mid x_i \geq 0 \text{ for all } x \in \mathcal{X}\}$, $\mathcal{I}^{-}= \{i  \mid x_i < 0 \text{ for all } x \in \mathcal{X}\}$, and $\mathcal{I}^{\pm}= \{1,\cdots,n\} \setminus (\mathcal{I}^{+} \cup \mathcal{I}^{-})$.
		%
		%
		Define $\balpha = [\alpha+(\beta-\alpha)\mathbf{1}_{\mathcal{I}^{+}}(1),\cdots,\alpha+(\beta-\alpha)\mathbf{1}_{\mathcal{I}^{+}}(n)]$ and $\bbeta = [\beta-(\beta-\alpha)\mathbf{1}_{\mathcal{I}^{-}}(1),\cdots,\beta-(\beta-\alpha)\mathbf{1}_{\mathcal{I}^{-}}(n)]$
		%
		%
		Then $\phi$ satisfies the $\delta\mathrm{QC}$ defined by $(\mathcal{X},\mathcal{X},\mathcal{Q})$, where
		\begin{align}
			\mathcal{Q} = \{Q \mid Q =  \begin{bmatrix}
				-2\operatorname{diag}(\balpha \circ \bbeta \circ \lambda ) & \operatorname{diag}((\balpha+\bbeta) \circ \lambda )  \\ \operatorname{diag}((\balpha+\bbeta) \circ \lambda )  & -2\operatorname{diag}(\lambda)
			\end{bmatrix}, \ \lambda_{i} &\in \mathbb{R}_{+} \ \text{ for }  i \in \mathcal{I}^{\pm}\}.
		\end{align}
	\end{lemma}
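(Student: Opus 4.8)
The plan is to reduce the vector statement to a sum of scalar incremental quadratic constraints, one per coordinate, and then handle the three index classes separately. Writing $\Delta x = x-y$ and $\Delta\phi = \phi(x)-\phi(y)$, left- and right-multiplying the claimed $Q$ by $\begin{bmatrix}\Delta x^\top & \Delta\phi^\top\end{bmatrix}^\top$ and its transpose gives, by the block-diagonal structure, $\sum_{i=1}^n \lambda_i\bigl(-2\balpha_i\bbeta_i(\Delta x_i)^2 + 2(\balpha_i+\bbeta_i)\Delta x_i\Delta\phi_i - 2(\Delta\phi_i)^2\bigr)$, where $\Delta\phi_i = \varphi(x_i)-\varphi(y_i)$ with $\varphi(t)=\max(\alpha t,\beta t)$ the scalar (leaky) ReLU. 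So it suffices to show that each summand is nonnegative for the stated range of $\lambda_i$, i.e. that $\bigl(\tfrac{\Delta\phi_i}{\Delta x_i}-\balpha_i\bigr)\bigl(\tfrac{\Delta\phi_i}{\Delta x_i}-\bbeta_i\bigr)\le 0$ (when $\Delta x_i\ne 0$), and that the $\lambda_i=0$ option is available for $i\notin\mathcal{I}^\pm$.

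The first step is the key scalar fact: for the scalar leaky ReLU $\varphi(t)=\max(\alpha t,\beta t)$ with $0\le\alpha\le\beta$, the incremental slope $s := \tfrac{\varphi(u)-\varphi(v)}{u-v}$ always lies in $[\alpha,\beta]$ for any $u\ne v$. I would prove this by a short case analysis on the signs of $u$ and $v$: if both are $\ge 0$ the slope is exactly $\beta$; if both are $<0$ it is exactly $\alpha$; if they straddle zero, say $v<0\le u$, then $\varphi(u)-\varphi(v)=\beta u-\alpha v$ and $u-v=u+|v|$, so $s=\tfrac{\beta u+\alpha|v|}{u+|v|}$ is a convex combination of $\beta$ and $\alpha$, hence in $[\alpha,\beta]$. (The case $\Delta x_i=0$ is trivial since then $\Delta\phi_i=0$ too and the summand vanishes.) This immediately yields $(s-\alpha)(s-\beta)\le 0$, and multiplying through by $(\Delta x_i)^2\ge 0$ and by $\lambda_i\ge 0$ gives $\lambda_i\bigl(-2\alpha\beta(\Delta x_i)^2+2(\alpha+\beta)\Delta x_i\Delta\phi_i-2(\Delta\phi_i)^2\bigr)\ge 0$.

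The second step is to sharpen the per-coordinate bracket $[\alpha,\beta]$ to $[\balpha_i,\bbeta_i]$ on the basis of the activation-status sets. For $i\in\mathcal{I}^+$, every $x\in\mathcal{X}$ has $x_i\ge 0$, and since $\mathcal{X}=\mathcal{Y}$ the same holds for $y$; hence both arguments are nonnegative and the incremental slope is exactly $\beta$, so the bracket collapses to $\{\beta\}=[\balpha_i,\bbeta_i]$ with $\balpha_i=\beta$. Symmetrically, for $i\in\mathcal{I}^-$ both arguments are negative, the slope is exactly $\alpha$, and $\bbeta_i=\alpha$. In these two cases the scalar inequality $(s-\balpha_i)(s-\bbeta_i)\le 0$ holds as an equality, so it is valid for \emph{arbitrary} real $\lambda_i$ (not just $\lambda_i\ge 0$) — which is exactly why the lemma only requires $\lambda_i\ge 0$ for $i\in\mathcal{I}^\pm$ and leaves $\lambda_i$ free otherwise. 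For $i\in\mathcal{I}^\pm$ we fall back on the generic $[\alpha,\beta]=[\balpha_i,\bbeta_i]$ bracket from Step 1, which needs $\lambda_i\ge 0$. Summing the nonnegative (or zero) terms over all $i$ completes the proof.

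I do not expect a serious obstacle here; the argument is essentially the scalar chord-slope computation plus bookkeeping. The one place to be careful is the sign convention at the kink and the boundary cases $\Delta x_i=0$ and $x_i$ or $y_i$ exactly $0$ — one must check that $\varphi(0)=0$ makes the straddling formula continuous with the two pure cases, and that the definitions of $\mathcal{I}^{+},\mathcal{I}^{-}$ (with $x_i\ge 0$ versus $x_i<0$) are consistent with ``slope $=\beta$'' and ``slope $=\alpha$'' respectively. The other subtlety worth stating explicitly is the justification that the collapsed bracket permits unrestricted $\lambda_i$: since the scalar expression is then identically zero, multiplying by any sign of $\lambda_i$ preserves the inequality, so enlarging $\mathcal{Q}$ by freeing those multipliers is sound.
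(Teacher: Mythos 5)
Your proposal is correct and follows essentially the same route as the paper's proof: expand the quadratic form into per-coordinate terms, split the sum over $\mathcal{I}^{+}$, $\mathcal{I}^{-}$, and $\mathcal{I}^{\pm}$, observe that the first two classes contribute identically zero (so those multipliers are unrestricted), and invoke the chord-slope containment $\balpha_i \leq \tfrac{\varphi(x_i)-\varphi(y_i)}{x_i-y_i} \leq \bbeta_i$ for the uncertain class. The only difference is that you spell out the scalar slope fact for general leaky ReLU by sign case analysis, whereas the paper asserts it and writes the vanishing terms in the vanilla-ReLU instantiation $\alpha_i=\beta_i\in\{0,1\}$; this is added detail, not a different argument.
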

%
	
	\section{Certifying Local Incremental Quadratic Constraints via Semidefinite Programming}
	
	
	In Theorem \ref{thm: main theorem} we combine the $\delta\mathrm{QC}$s of individual layers to derive an LMI for certifying a $\delta\mathrm{QC}$ for the entire neural network. Before stating the result, we define $\bx = [{x_0}^\top \cdots x_{\ell}^\top]^\top \in \mathbb{R}^{n_0+n}$ and the entry selector matrices $E_k$ such that $x_k = E_k \bx$ for $k=0,\cdots,\ell$. 
	
	\begin{theorem} \label{thm: main theorem} For the neural network in \eqref{eq: nn equations}, define the reachable sets of the pre-activation vectors as
		%
		$\mathcal{D}_{k} = \{W_k x_k + b_k \mid x_i = \phi(W_{i-1} x_{i-1} + b_{i-1}), \ i=1,\cdots,k, \ x_0 \in \mathcal{C}_0\}$, $k=0,\cdots,\ell-1$.
		Suppose each nonlinear layer $\phi_k$ satisfies the local incremental quadratic constraint defined by $(\mathcal{D}_k,\mathcal{D}_k,\mathcal{Q}_k)$. For a given $Q_f \in \mathbb{S}^{n_x+n_f}$, define the following matrix
		\begin{align} \label{eq: thm: main theorem 1}
			M(Q_0,\cdots,Q_{\ell-1},Q_f) = \sum_{k=0}^{\ell-1} \begin{bmatrix}
				W_k E_k \\ E_{k+1}
			\end{bmatrix}^\top Q_k \begin{bmatrix}
				W_k E_k \\ E_{k+1}
			\end{bmatrix} -  \begin{bmatrix}
				E_0 \\ W_{\ell} E_{\ell}
			\end{bmatrix}^\top Q_f \begin{bmatrix}
				E_0 \\ W_{\ell} E_{\ell}
			\end{bmatrix}.
		\end{align}
		If $M(Q_0,\cdots,Q_{\ell-1},Q_f)  \preceq 0$ for some $(Q_0,\cdots,Q_{\ell-1}) \in \mathcal{Q}_0 \times \cdots \times \mathcal{Q}_{\ell-1}$, then 
		\begin{align} \label{eq: thm: main theorem 2}
			\begin{bmatrix}
				x-y \\ f(x)-f(y)
			\end{bmatrix}^\top Q_f \begin{bmatrix}
				x-y \\ f(x)-f(y)
			\end{bmatrix} \geq 0 \quad \forall x,y \in \mathcal{C}_0.
		\end{align}
	\end{theorem}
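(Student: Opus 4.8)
The plan is a dissipativity-style S-procedure argument carried out in a lifted coordinate that stacks all layer activations. To keep the bookkeeping light, introduce the shorthand $G_k := \begin{bmatrix} W_k E_k \\ E_{k+1} \end{bmatrix}$ for $k=0,\dots,\ell-1$ and $H := \begin{bmatrix} E_0 \\ W_\ell E_\ell \end{bmatrix}$, so that the matrix in \eqref{eq: thm: main theorem 1} reads $M = \sum_{k=0}^{\ell-1} G_k^\top Q_k G_k - H^\top Q_f H$. Fix arbitrary $x,y \in \mathcal{C}_0$, propagate both through the network \eqref{eq: nn equations}, and collect the activations into $\bx = [x_0^\top \cdots x_\ell^\top]^\top$ and $\boldsymbol{y} = [y_0^\top \cdots y_\ell^\top]^\top$, with $x_0=x$, $y_0=y$; set $z := \bx - \boldsymbol{y}$. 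Since the selector matrices act identically on both trajectories, $x_k - y_k = E_k z$ for all $k$; since the biases cancel in every increment, $(W_k x_k+b_k)-(W_k y_k+b_k) = W_k E_k z$ and $f(x)-f(y) = W_\ell(x_\ell - y_\ell) = W_\ell E_\ell z$; and $x-y = E_0 z$. Hence $\begin{bmatrix} x-y \\ f(x)-f(y)\end{bmatrix} = H z$ and $\begin{bmatrix} (W_k x_k+b_k)-(W_k y_k+b_k) \\ x_{k+1}-y_{k+1}\end{bmatrix} = G_k z$.

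Next I would feed in the per-layer constraints. For each $k$, the pre-activation vectors $W_k x_k+b_k$ and $W_k y_k+b_k$ both lie in the reachable set $\mathcal{D}_k$ — this is exactly how $\mathcal{D}_k$ was defined, and it is the place where $x_0,y_0\in\mathcal{C}_0$ is used — so by the hypothesis that $\phi_k$ satisfies the $\delta\mathrm{QC}$ on $(\mathcal{D}_k,\mathcal{D}_k,\mathcal{Q}_k)$ together with Definition \ref{def: Incremental}, and recalling $\phi_k(W_k x_k+b_k)=x_{k+1}$, we obtain $z^\top G_k^\top Q_k G_k z \ge 0$ for every $Q_k\in\mathcal{Q}_k$, in particular for the tuple $(Q_0,\dots,Q_{\ell-1})$ that renders $M\preceq 0$. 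Summing over $k=0,\dots,\ell-1$ yields $z^\top\big(\sum_{k=0}^{\ell-1} G_k^\top Q_k G_k\big)z \ge 0$.

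Finally I would combine this with the LMI. Using the identities above,
\begin{align*}
\begin{bmatrix} x-y \\ f(x)-f(y)\end{bmatrix}^\top Q_f \begin{bmatrix} x-y \\ f(x)-f(y)\end{bmatrix} = z^\top H^\top Q_f H z = z^\top\Big(\sum_{k=0}^{\ell-1} G_k^\top Q_k G_k\Big)z - z^\top M z ,
\end{align*}
where the last equality is just the definition of $M$. The first term on the right is nonnegative by the previous step, while $z^\top M z \le 0$ because $M \preceq 0$; hence the left-hand side is $\ge 0$. Since $x,y\in\mathcal{C}_0$ were arbitrary, \eqref{eq: thm: main theorem 2} follows.

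The manipulations are short, and the only step that genuinely demands care — what I would flag as the main obstacle — is the domain bookkeeping: the per-layer $\delta\mathrm{QC}$ may be invoked only at arguments that are actually admissible. This is precisely what forces $\mathcal{D}_k$ to be the forward-reachable set of pre-activations from $\mathcal{C}_0$, so that membership is automatic for both the $x$- and the $y$-trajectory. In a concrete implementation one cannot compute $\mathcal{D}_k$ exactly and instead propagates outer approximations $\widehat{\mathcal{D}}_k \supseteq \mathcal{D}_k$, appealing to the monotonicity remark after Definition \ref{def: Incremental} (shrinking $\mathcal{X},\mathcal{Y}$ preserves the $\delta\mathrm{QC}$); but this refinement does not touch the logic above. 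Everything else — the appearance of only the $W_k$'s and not the $b_k$'s, and the reduction to a single quadratic form in $z$ — is a routine consequence of taking increments and of the action of the selector matrices $E_k$.
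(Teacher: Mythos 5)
Your proposal is correct and follows essentially the same route as the paper's proof: left- and right-multiplying the LMI by the stacked increment vector $\bx-\by$, invoking the per-layer $\delta\mathrm{QC}$ on the reachable sets $\mathcal{D}_k$ to make each summand nonnegative, and rearranging to isolate the $Q_f$ term. Your write-up is in fact somewhat more careful about the domain bookkeeping (membership of both pre-activation trajectories in $\mathcal{D}_k$) than the paper's own terse argument.
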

	%
	%
	%
	
	When specialized to certifying local Lipschitz continuity (see \eqref{eq: incremental qc nn}), Theorem \ref{thm: main theorem} essentially extends the the main result of \cite{fazlyab2019efficient} to local Lipschitz bounds. In \cite{fazlyab2019efficient} it is assumed that the input set $\mathcal{C}_0$ to the neural network is $\mathbb{R}^{n_{x}}$ and hence, the resulting bound is for the global Lipschitz constant. In contrast, we assume a compact input set $\mathcal{C}_0 \subset \mathbb{R}^{n_x}$ and abstract the activation layers over the reachable sets $\mathcal{D}_k$ to find a bound on the local Lipschitz constant. Specifically, the best upper bound on the local Lipschitz constant in $\ell_2$ norm can be obtained by solving the following SDP,
	\begin{alignat}{2}
		&\mathrm{mininimize}  \quad && \rho \\ \notag
		& \text{subject to } \quad && M(Q_0,\cdots,Q_{\ell-1},Q_f) \preceq 0, \\ \notag
		& \quad && Q_k \in \mathcal{Q}_k \quad k=0,\cdots,\ell-1 \quad \rho \geq 0.
	\end{alignat}
	where $Q_f$ is given by $Q_f = \mathrm{blkdiag}(\rho I_{n_x},-I_{n_f})$ and the decision variables are $\rho,Q_0,\cdots,Q_{\ell-1}$. 
	%
	 

	\subsection{Computation of Pre-activation Bounds} \label{subsection: pre-activation bounds}
	
	Instead of finding the reachable sets $\mathcal{D}_k$ exactly, which is computationally prohibitive, we over approximate them by hyper-rectangles, i.e, we seek to find $l^k, u^k \in \mathbb{R}^{n_{k+1}}$, $k=0,1,2,\cdots \ell-1$ such that $l^{k} \leq W_k x_k + b_k \leq u^{k}$. 
	%
	Here we provide an approach based on the idea presented in \cite{zhang2018efficient} to obtain the pre-activation bounds of the current layer given the pre-activation bounds of the previous layer. Specifically, given the reachable set $\mathcal{D}_{k-1}$ we truncate the activation functions on all the neurons and provide two optimal linear functions as the lower and upper bounds. We denote these linear functions by $h_{L,i}^k$, $h_{U,i}^k$, $i=1,2,\cdots,n_k$ which specify the lower and upper bounds on the activation function for the $i$-th neuron located in the $k$-th layer, as depicted in Fig. \ref{fig:uplowlinear}.
	
	\begin{figure}
		\begin{minipage}[t]{0.46\linewidth}
			\vspace{0pt}\centering
			\includegraphics[width=0.8\columnwidth]{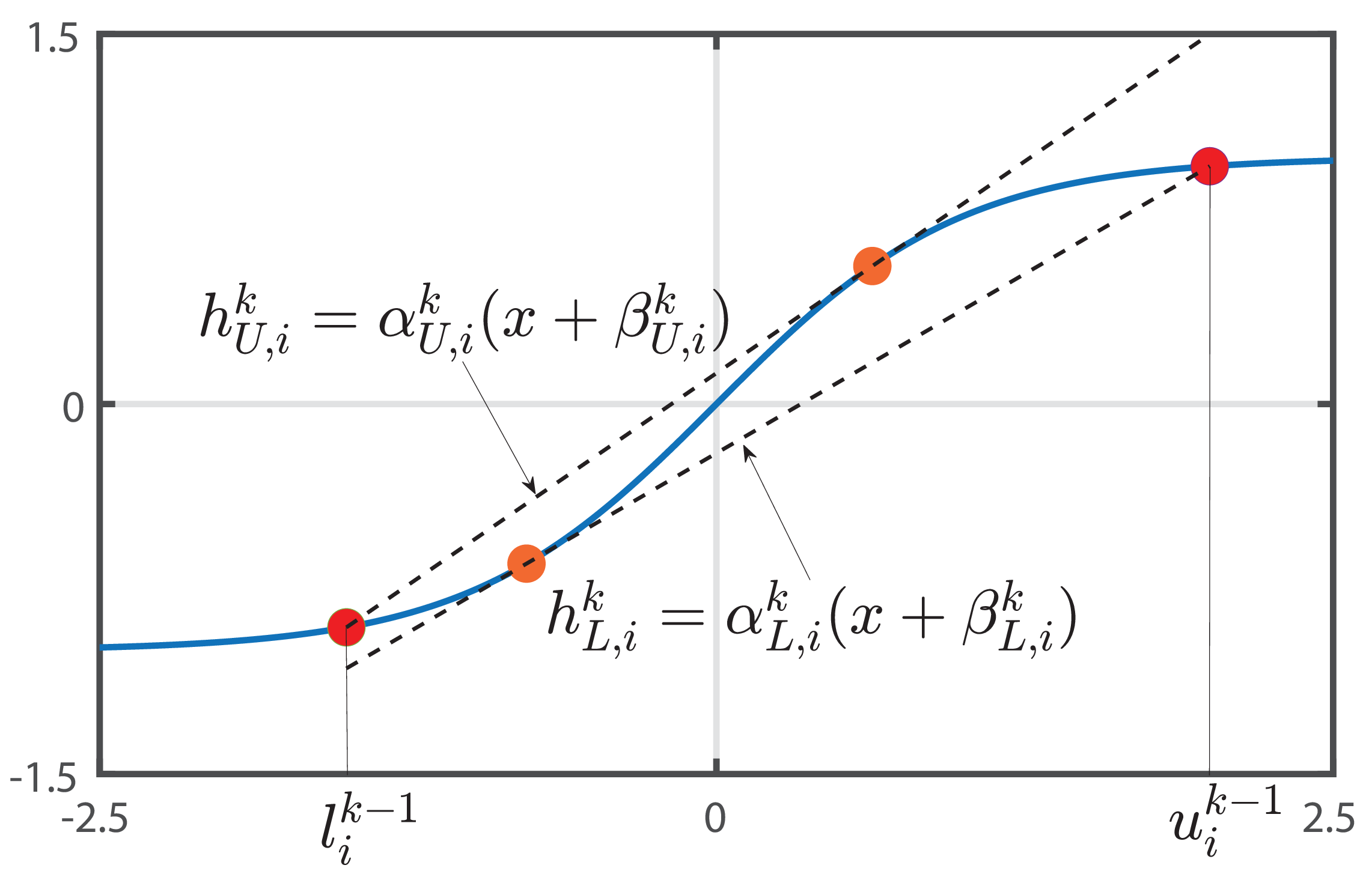}
			\vspace{-0.5cm}
			\caption{\footnotesize{The activation function of neuron $i$ in layer $k$ is lower- and upper-bounded by the linear functions $h_{L,i}^k$ and $h_{U,i}^k$, respectively, over the pre-activation interval $[l_i^{k-1},u_i^{k-1}]$.
			}}
			\label{fig:uplowlinear}
		\end{minipage}
		\hspace{0.03\linewidth}
		\begin{minipage}[t]{0.46\linewidth}
			\vspace{0pt}
			\begin{algorithm}[H]
				\SetKwInput{KwData}{Input} 
				\SetAlgoLined 
				\DontPrintSemicolon
				\KwData{$\mathcal{C}_0:=\left[\underline{x}_0, \overline{x}_0\right]$ (Input the neural network)}
				\KwResult{$\mathcal{D}_k:=\left[l^k,\ u^k\right], k=0,\cdots,\ell-1$.}
				{\small $l^0=-|W_0|\tfrac{\overline{x}_0-\underline{x}_0}{2}+W_0\tfrac{\overline{x}_0+\underline{x}_0}{2}+b_0 $}\\
				{\small$u^0=|W_0|\tfrac{\overline{x}_0-\underline{x}_0}{2}+W_0\tfrac{\overline{x}_0+\underline{x}_0}{2}+b_0 $}\\ 
				\For{$k \in \left\{1,2,\cdots,\ell-1\right\}$}{
					{\small$l^{k}=-|\underline{C}_k|\tfrac{u^{k-1}-l^{k-1}}{2}+\underline{C}_k\tfrac{u^{k-1}+l^{k-1}}{2}+\underline{d}_k$}\\
					{\small $u^{k}=|\overline{C}_k|\tfrac{u^{k-1}-l^{k-1}}{2}+\overline{C}_k\tfrac{u^{k-1}+l^{k-1}}{2}+\overline{d}_k$}}
				\caption{\small Computing pre-activation bounds}
				\label{alg:prebounds}
			\end{algorithm}
		\end{minipage}
	\end{figure}
	
	Table 2 in \cite{zhang2018efficient} provides a comprehensive characterization of the linear functions $h_{L,i}^k(x)=\alpha_{L,i}^k(x+\beta_{L,i}^k)$ and $h_{U,i}^k(x)=\alpha_{U,i}^k(x+\beta_{U,i}^k)$ based on the pre-activation interval $x \in [l_i^{k-1},u_i^{k-1}]$. Concatenating the slopes and intercepts of the $k$-th layer $\alpha_{L,i}^k$, $\alpha_{U,i}^k$, $\beta_{L,i}^k$, and $\beta_{U,i}^k$ into the vectors $\alpha_{L}^k,\alpha_{U}^k,\beta_{L}^k$ and $\beta_{U}^k$, respectively, we then devise Algorithm \ref{alg:prebounds} to iteratively compute the pre-activation bounds over all layers, with the following matrix definitions 
	\begin{equation}\label{eq:matrixdefinition}
		\begin{aligned}
			\underline{C}_k&= W_k^+\operatorname{diag}(\alpha_L^{k})+W_k^-\operatorname{diag}(\alpha_U^{k}),\quad
			\overline{C}_k= W_k^+\operatorname{diag}(\alpha_U^{k})+W_k^-\operatorname{diag}(\alpha_L^{k}),\\
			\underline{d}_k&=W_k^+(\alpha_L^k \circ \beta_L^k)+W_k^-(\alpha_U^k \circ \beta_U^k)+b_k,\quad
			\overline{d}_k=W_k^+(\alpha_U^k \circ \beta_U^k)+W_k^-(\alpha_L^k \circ \beta_L^k)+b_k.
		\end{aligned}
	\end{equation}
	with $W_k^+ = \tfrac{W_k+|W_k|}{2}$ and $W_k^- = \tfrac{W_k-|W_k|}{2}$. 
	We provide a detailed description of this approach in Appendix \ref{apdx:prebound-proof}, including the differences when compared with \cite{zhang2018efficient}.

	\section{Numerical Experiments}
	
	
	

	
	\subsection{Local vs. Global Lipschitz Constants} We consider $\ell_\infty$ ball input sets $\mathcal{C}_0 = \{x_0 \mid \|x_0\|_{\infty} \leq \epsilon\}$ parameterized by $\epsilon$. We then compute the ratio between the local and global Lipschitz bounds (both computed using Theorem \ref{thm: main theorem}) as a function of increasing $\epsilon$. In Figure \ref{fig:Casestudy1}, we plot the results for randomly generated (weights pulled from the normal distribution) hyperbolic tangent neural networks of various depths and widths. 
	
	\begin{figure}[t]
		\centering
		\includegraphics[width=0.9\columnwidth]{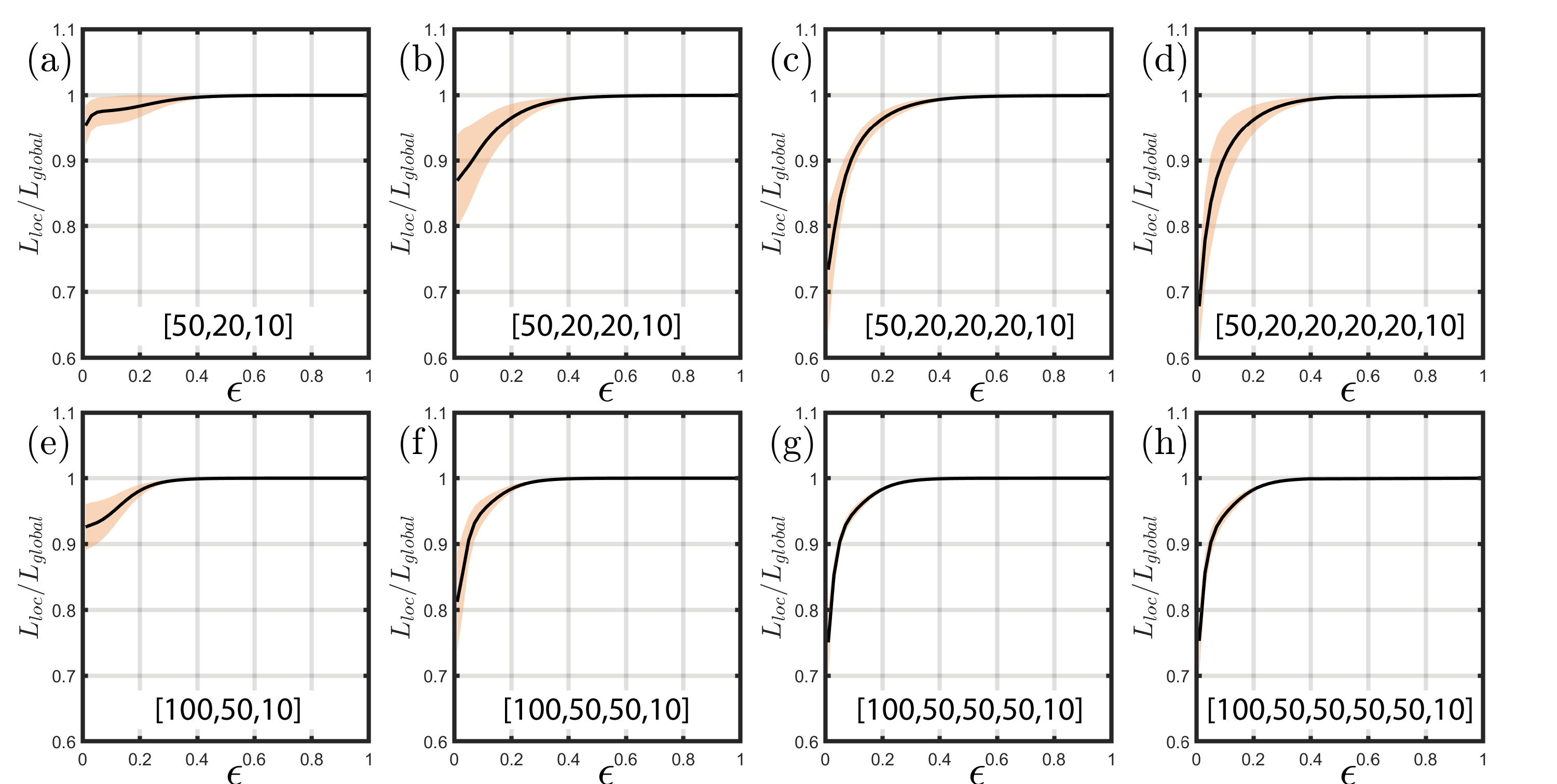}
		\caption{\footnotesize{The local Lipschitz constant approaches the global value as the input set expands in random neural networks with various widths and depths (sizes inset). Black lines demarcate the average and shaded region the standard deviation over 10 different realizations.  
		}}
		\label{fig:Casestudy1}
	\end{figure}
	
	
	
	\medskip
	
	%
	\subsection{MNIST Classification Robustness}
	Following the discussion in Section \ref{sec:problem}, we analyze the robustness of a neural network classifier trained on the MNIST Dataset. 
	Calculating a local Lipschitz constant $L_\text{loc}(\epsilon)$ over a sufficiently large perturbation $\epsilon$, the maximum input perturbation $\epsilon^*$ of the input data $x$ (in $\ell_\infty$ norm) which does not change the classification is given by $\epsilon^*=\frac{\rho}{\sqrt{n_0}L_\text{loc}(\epsilon^*)}$, where the rescaling factor $\sqrt{n_0}$ accommodates changing from the $\ell_2$ norm to the $\ell_\infty$ norm, motivated by pixel-wise perturbations.
	We train a ReLU network using the linear programming-based method of \cite{wong2018provable} (with $\ell_\infty$ perturbation radius $0.05$) with dimensions $[784,100,50,50,10]$. The input datapoint $x$ is labeled as $i^*=1$ and $\rho=11.1427$. Because the local Lipschitz constant is a function of the input perturbation $\epsilon$, we use a bisection algorithm to determine 
	the $\epsilon$ for which $\sqrt{n_0}L_\text{loc}(\epsilon)\epsilon = \rho$. This results in $\epsilon^*=0.05892$ and $L_\text{loc}=6.7529$. If we were to instead use the global Lipschitz constant, $L_\text{global}=14.3764$, the certification radius becomes $\epsilon^*=0.0277$ which is $53\%$ smaller and underscores the enhancement possible using the local Lipschitz constant.
	
%
	\medskip
	
	\noindent \textbf{Comparison with Other Methods. }
	%
	Figure \ref{fig:Comparison} provides a comparison of Local-LipSDP with various current methods for estimating the local Lipschitz constant over an expanding input set. The scalability and technical limitations of some of these methods restrict the sizes of networks we can consider. Nonetheless, the effectiveness of our method is clear even on these relatively simple networks. In Figure \ref{fig:Comparison}a, Local-LipSDP compares well against LiPopt on a random tanh network with layer sizes $[20,20,20,1]$ (LiPopt requires a single output, hence why we compare separately). In Figure \ref{fig:Comparison}b, Local-LipSDP outperforms FastLip and LipLP on a random ReLU network with layer sizes $[2,100,100,2]$. LipMIP provides the exact value at the cost of solving a mixed-integer nonconvex optimization, the scalability of which limits the sizes that can be considered. In this case the sampling approaches (naive lower bound, CLEVER) perform well because the network is small with low-dimension inputs and outputs. The performance of these sample-based methods degrades as the network complexity grows. The experiments in Figure \ref{fig:Comparison} highlight the effectiveness of Local-LipSDP and its capability to handle the full-scale MNIST classifier in the previous experiments demonstrates its scalability and lack of technical limitations compared with other methods.
	

	\begin{figure}[tb]
		\begin{tabular}{ll}
			\begin{minipage}{3in}
				\includegraphics[width=\linewidth]{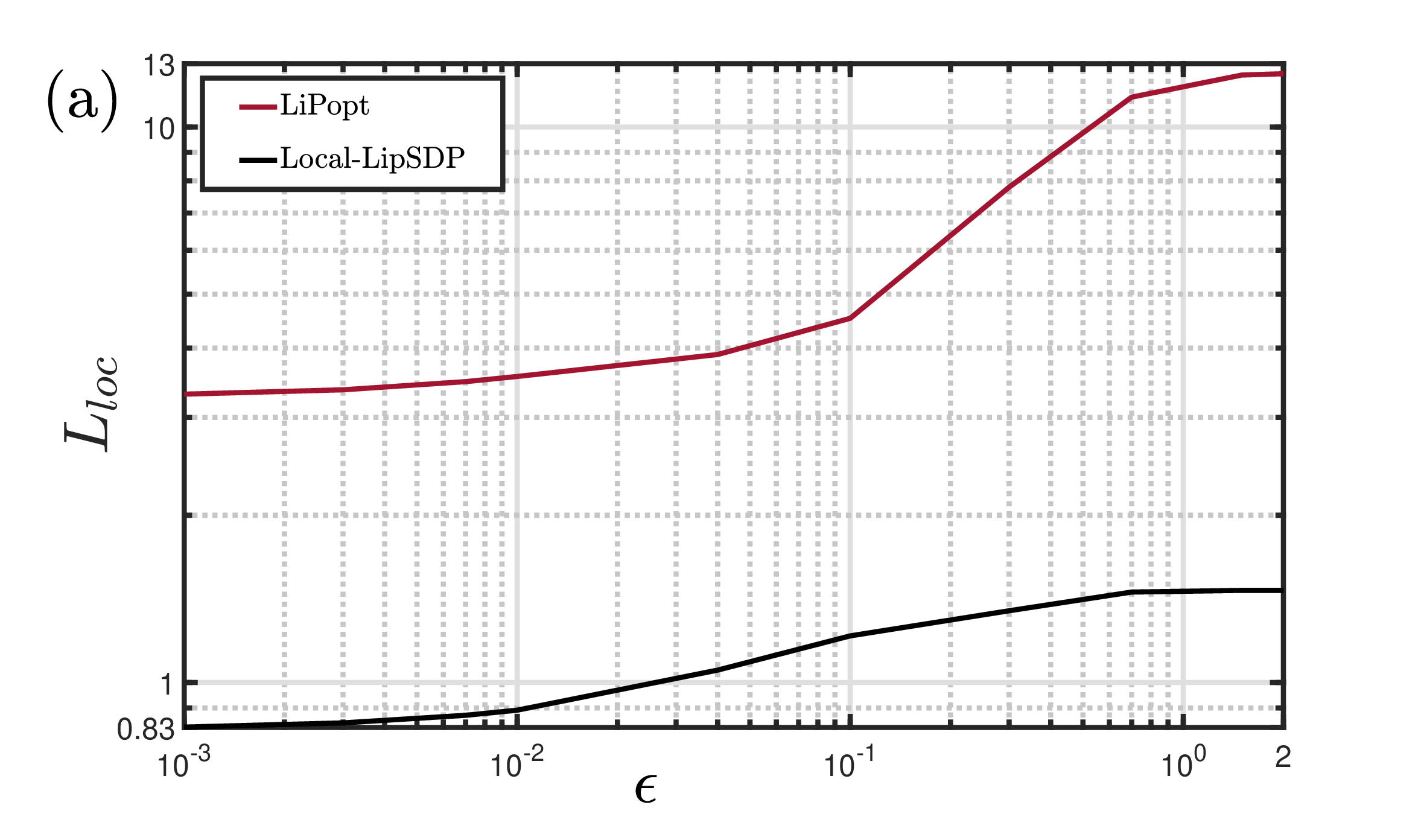}
			\end{minipage}
			&
			\begin{minipage}{3in}
				\includegraphics[width=\linewidth]{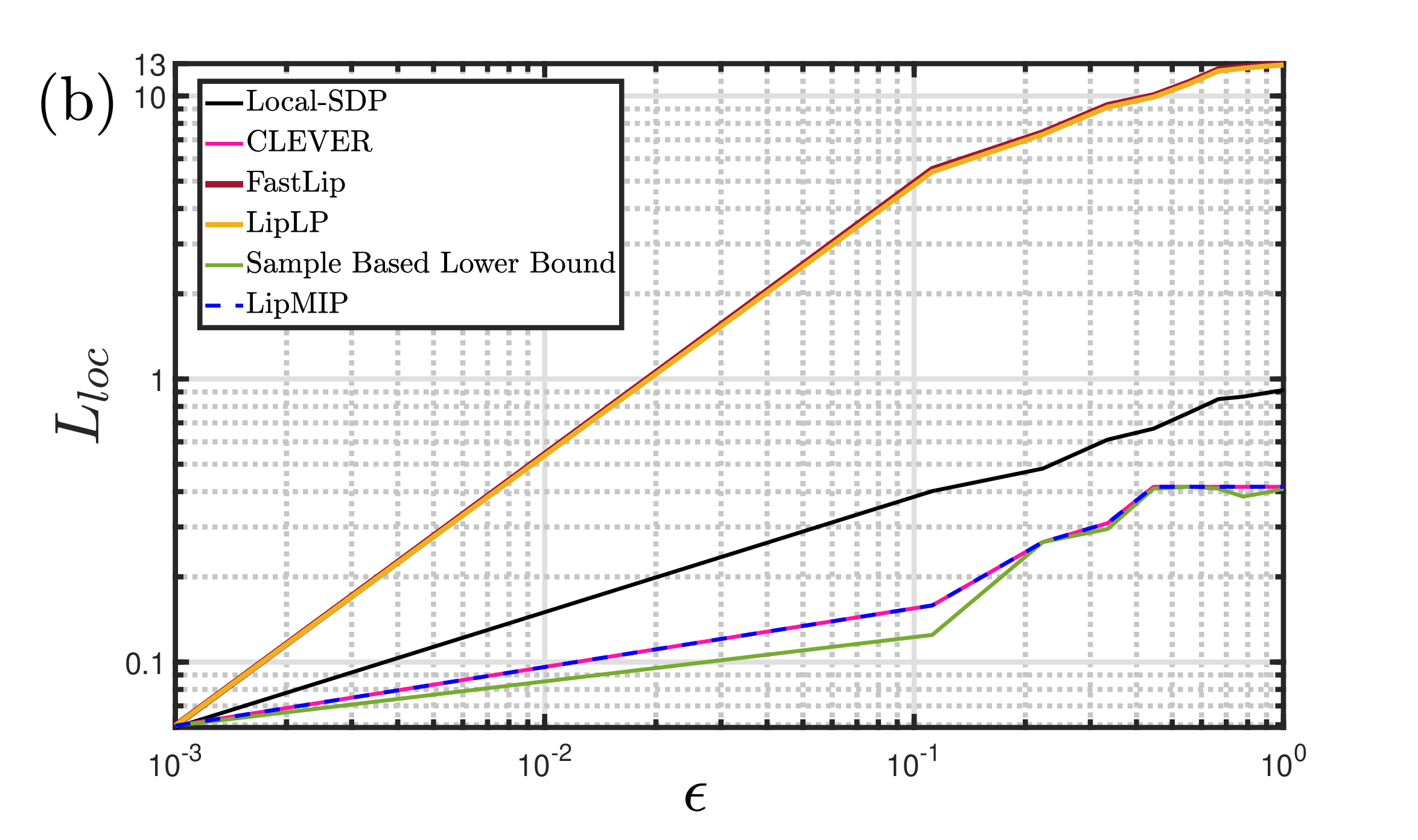}
			\end{minipage}
		\end{tabular}
		\caption{\footnotesize Local-LipSDP outperforms current methods on random tanh network $[20,20,20,1]$ in (a) and ReLU network $[2,100,100,2]$ in (b), chosen relatively small to avoid the scalability challenges and technical limitations of several of the other methods.
		}
		\label{fig:Comparison}
	\end{figure}

	\medskip
	
	\subsection{Stability Analysis of an Approximate MPC Controller}
	Consider a double integrator system
	\begin{align}
		x_{t+1} \!=\! A x_t \!+\! B u_t \quad A = 1.2 \begin{bmatrix}
			1 & 1\\  0 & 1
		\end{bmatrix}, \ B =\begin{bmatrix}
			1 \\ 0.5
		\end{bmatrix}, 
	\end{align}
	subject to the state and input constraints $x_t \in \mathcal{X}=\{ x\mid \|x\|_{\infty} \leq 5\}$ and $u \in \mathcal{U}=\{u \mid \|u\|_{\infty} \leq 1\}$. Consider the finite horizon problem
	\begin{alignat}{2}
		& \mathrm{minimize} \quad &&\sum_{t=0}^{T-1} \|x_t\|_2^2 + u_t^2 \\
		& \text{subject to } \quad && x_{t+1}= Ax_t + Bu_t \notag \\
		& \quad && (x_t,u_t) \in \mathcal{X} \times \mathcal{U} \quad t=0,\cdots,T-1 \notag \\
		& \quad && x_T = 0 \notag
	\end{alignat}
	with the corresponding optimal solution $(x_1^*,\cdots,x_T^*)$ and $(u_0^*,\cdots,u_{T-1}^*)$. THe MPC control law implements $\mu_{MPC}(x)=u_0^\star$. Instead of implementing this control policy, we implement $u_t = \pi(x_t) \approx \mu_{MPC}(x_t)
	$, where $f$ is a ReLU network with architecture $[2,32,32,1]$ that is trained off-line to approximate $\mu_{MPC}(x)$. For generating the training data, we compute $\mu_{MPC}(x)$ at 6284 uniformly chosen random points from the control invariant set. Our goal is to find the largest ellipsoidal invariant set inside the region of interest $\mathcal{D}=\{x \mid \|x\|_{\infty} \leq \epsilon\}$ containing the equilibrium point $x_\star=0$. We first consider the quadratic Lyapunov function $V(x)=x^\top P x$ with $P \succ 0$ and a candidate invariant set $\mathcal{D}_{\beta} =\{x  \mid V(x)\leq \beta \}$.  Now if $V(Ax+B\pi(x)) \leq V(x)$ for all $x \in \mathcal{D}_{\beta}$, then every trajectory starting in $\mathcal{D}_{\beta}$ ($x_0 \in \mathcal{D}_{\beta}$) will remain inside $\mathcal{D}_{\beta}$ ($x_t \in \mathcal{D}_{\beta}$ for all $t \geq 0$). Therefore, the maximum value of $\beta$ such that $\mathcal{D}_{\beta} \subseteq \mathcal{D}$ yields the maximum inner estimate of the positive invariant set. 

	The condition $V(Ax+B\pi(x)) \leq V(x)$ for all $x \in \mathcal{D}_{\beta}$ is equivalent to
	\begin{align} \label{eq: iqc mpc example}
		\begin{bmatrix}
			x-x_{\star} \\ \pi(x)-\pi(x_\star)
		\end{bmatrix}^\top \begin{bmatrix}
			A^\top P A - P & PB \\ B^\top P  & B^\top P B
		\end{bmatrix}\begin{bmatrix}
			x-x_{\star} \\ \pi(x)-\pi(x_\star)
		\end{bmatrix} \leq 0 \quad \forall x \in \mathcal{D}_{\beta}.
	\end{align}
	This is an instance of \eqref{eq: local lip bounds}, where the goal is to find a $P \succ 0$ such that \eqref{eq: iqc mpc example} holds. Note, however, that the input set $ \mathcal{D}_{\beta}$ is a function of the uknown $P$. Therefore, we certify \eqref{eq: iqc mpc example} over the larger set $\mathcal{D}$.  To this end, for a fixed $\epsilon$, we first compute the pre-activation bounds using the method described in $\S$\ref{subsection: pre-activation bounds} using $\mathcal{D}=\{x \mid \|x\|_{\infty} \leq \epsilon\}$ as the input set to the neural network $\pi$. Then, we use Lemma \ref{lemma: ReLUQC} to characterize the activation layers using incremental quadratic constraints. Finally, we invoke Theorem \ref{thm: main theorem}, in which $Q_f$ is given by the negative of the middle matrix in \eqref{eq: iqc mpc example}. If the resulting LMI is feasible $P$, then $\mathcal{D}_{\beta}$ is a positive invariant set.

	To find the largest invariant set, we first use bisection to find the largest $\epsilon$ such that \eqref{eq: iqc mpc example} holds for some positive definite $P$. For this largest value of $\epsilon$, we then find the maximum $\beta$ such that $\mathcal{D}_{\beta} \subseteq \mathcal{D}$, which is equal to $\beta = \min_{i} \epsilon^2 / (e_i^\top P^{-1} e_i)$, where $e_i$ is the $i$-th unit vector in $\mathbb{R}^2$.
	
	For our problem data, we find $\epsilon^\star=0.669$. In Figure \ref{fig:Casestudy2}, we plot the largest ellipsoidal invariant set (in blue) inside $\mathcal{D}$. We also plot the reachable sets $\mathcal{X}_k$ of the closed loop system from the initial set $\mathcal{X}_0 = \mathcal{D}_{\beta}$ to visualize the invariance of $\mathcal{D}_{\beta}$. 
	
	
	\begin{figure}[t]
		\centering
		\includegraphics[width=0.85\columnwidth]{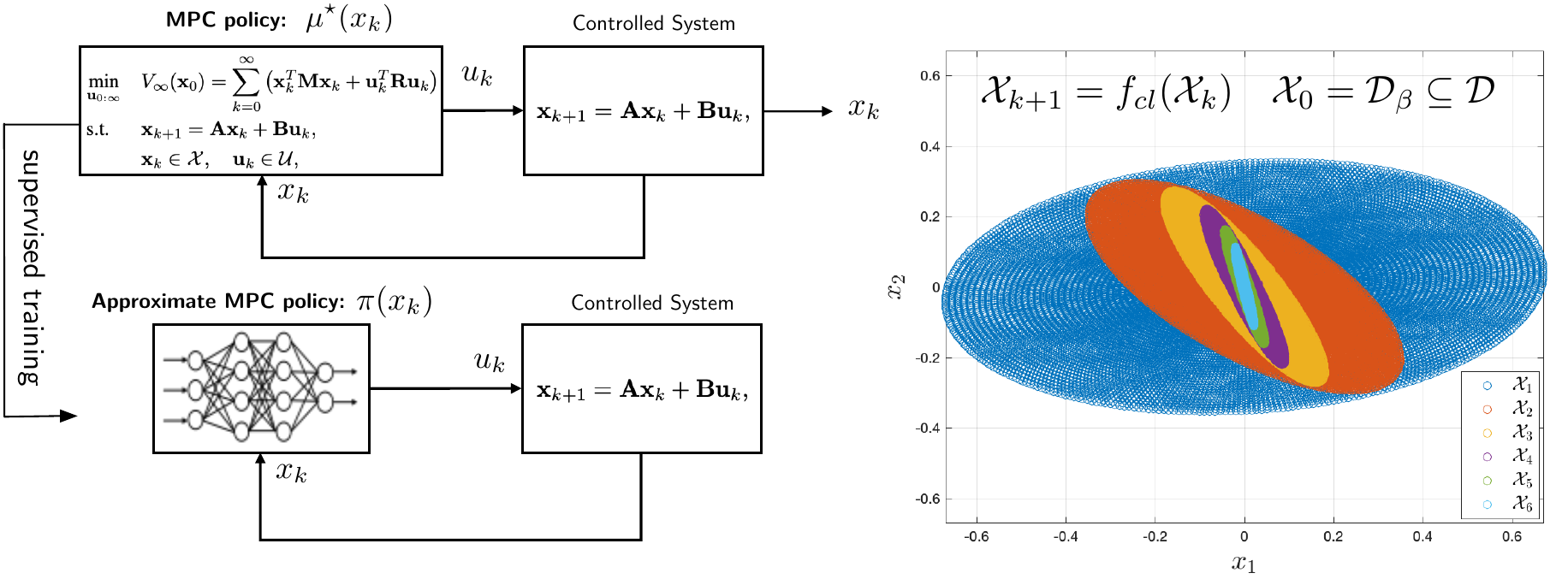}
		\caption{Illustration of an ellipsoidal invariant set computation for the approximate MPC example.}
		\label{fig:Casestudy2}
	\end{figure}
	\section{Conclusion}
	In this paper, we developed a convex program to certify incremental quadratic constraints on the map of neural networks over a region of interest. We illustrated the utility of our method in sharp estimation of local Lipschitz constant of neural networks as well as stability analysis of neural network controlled feedback systems via semidefinite programming. In principle, there is a trade-off between the conservatism, the run time, and the memory requirements of solving these convex programs: certifying tighter bounds typically requires more time and/or memory. Developing numerical algorithms that can span this trade-off is a future research direction.
	\bibliography{Reference}

\begin{thebibliography}{31}
\providecommand{\natexlab}[1]{#1}
\providecommand{\url}[1]{\texttt{#1}}
\expandafter\ifx\csname urlstyle\endcsname\relax
  \providecommand{\doi}[1]{doi: #1}\else
  \providecommand{\doi}{doi: \begingroup \urlstyle{rm}\Url}\fi

\bibitem[A{\c{c}}{\i}kme{\c{s}}e and Corless(2011)]{accikmecse2011observers}
Beh{\c{c}}et A{\c{c}}{\i}kme{\c{s}}e and Martin Corless.
\newblock Observers for systems with nonlinearities satisfying incremental
  quadratic constraints.
\newblock \emph{Automatica}, 47\penalty0 (7):\penalty0 1339--1348, 2011.

\bibitem[Avant and Morgansen(2020)]{avant2020analytical}
Trevor Avant and Kristi~A Morgansen.
\newblock Analytical bounds on the local lipschitz constants of affine-relu
  functions.
\newblock \emph{arXiv preprint arXiv:2008.06141}, 2020.

\bibitem[Bartlett et~al.(2017)Bartlett, Foster, and
  Telgarsky]{bartlett2017spectrally}
Peter~L Bartlett, Dylan~J Foster, and Matus~J Telgarsky.
\newblock Spectrally-normalized margin bounds for neural networks.
\newblock In \emph{Advances in Neural Information Processing Systems}, pages
  6240--6249, 2017.

\bibitem[Bolcskei et~al.(2019)Bolcskei, Grohs, Kutyniok, and
  Petersen]{bolcskei2019optimal}
Helmut Bolcskei, Philipp Grohs, Gitta Kutyniok, and Philipp Petersen.
\newblock Optimal approximation with sparsely connected deep neural networks.
\newblock \emph{SIAM Journal on Mathematics of Data Science}, 1\penalty0
  (1):\penalty0 8--45, 2019.

\bibitem[Boyd et~al.(1994)Boyd, El~Ghaoui, Feron, and
  Balakrishnan]{boyd1994linear}
Stephen Boyd, Laurent El~Ghaoui, Eric Feron, and Venkataramanan Balakrishnan.
\newblock \emph{Linear matrix inequalities in system and control theory},
  volume~15.
\newblock Siam, 1994.

\bibitem[Chen et~al.(2020)Chen, Lasserre, Magron, and
  Pauwels]{chen2020semialgebraic}
Tong Chen, Jean~B Lasserre, Victor Magron, and Edouard Pauwels.
\newblock Semialgebraic optimization for lipschitz constants of relu networks.
\newblock \emph{Advances in Neural Information Processing Systems}, 33, 2020.

\bibitem[Dean et~al.(2019)Dean, Matni, Recht, and Ye]{dean2019robust}
Sarah Dean, Nikolai Matni, Benjamin Recht, and Vickie Ye.
\newblock Robust guarantees for perception-based control.
\newblock \emph{arXiv preprint arXiv:1907.03680}, 2019.

\bibitem[Fawzi et~al.(2016)Fawzi, Moosavi-Dezfooli, and
  Frossard]{fawzi2016robustness}
Alhussein Fawzi, Seyed-Mohsen Moosavi-Dezfooli, and Pascal Frossard.
\newblock Robustness of classifiers: from adversarial to random noise.
\newblock \emph{Advances in Neural Information Processing Systems},
  29:\penalty0 1632--1640, 2016.

\bibitem[Fazlyab et~al.(2018)Fazlyab, Ribeiro, Morari, and
  Preciado]{fazlyab2018analysis}
Mahyar Fazlyab, Alejandro Ribeiro, Manfred Morari, and Victor~M Preciado.
\newblock Analysis of optimization algorithms via integral quadratic
  constraints: Nonstrongly convex problems.
\newblock \emph{SIAM Journal on Optimization}, 28\penalty0 (3):\penalty0
  2654--2689, 2018.

\bibitem[Fazlyab et~al.(2019{\natexlab{a}})Fazlyab, Morari, and
  Pappas]{fazlyab2019probabilistic}
Mahyar Fazlyab, Manfred Morari, and George~J Pappas.
\newblock Probabilistic verification and reachability analysis of neural
  networks via semidefinite programming.
\newblock In \emph{2019 IEEE 58th Conference on Decision and Control (CDC)},
  pages 2726--2731. IEEE, 2019{\natexlab{a}}.

\bibitem[Fazlyab et~al.(2019{\natexlab{b}})Fazlyab, Morari, and
  Pappas]{fazlyab2019safety}
Mahyar Fazlyab, Manfred Morari, and George~J Pappas.
\newblock Safety verification and robustness analysis of neural networks via
  quadratic constraints and semidefinite programming.
\newblock \emph{arXiv preprint arXiv:1903.01287}, 2019{\natexlab{b}}.

\bibitem[Fazlyab et~al.(2019{\natexlab{c}})Fazlyab, Robey, Hassani, Morari, and
  Pappas]{fazlyab2019efficient}
Mahyar Fazlyab, Alexander Robey, Hamed Hassani, Manfred Morari, and George
  Pappas.
\newblock Efficient and accurate estimation of lipschitz constants for deep
  neural networks.
\newblock In \emph{Advances in Neural Information Processing Systems}, pages
  11427--11438, 2019{\natexlab{c}}.

\bibitem[Haddad and Chellaboina(2011)]{haddad2011nonlinear}
Wassim~M Haddad and VijaySekhar Chellaboina.
\newblock \emph{Nonlinear dynamical systems and control: a Lyapunov-based
  approach}.
\newblock Princeton university press, 2011.

\bibitem[Hashemi et~al.(2020)Hashemi, Ruths, and
  Fazlyab]{hashemi2020certifying}
Navid Hashemi, Justin Ruths, and Mahyar Fazlyab.
\newblock Certifying incremental quadratic constraints for neural networks via
  convex optimization.
\newblock 2020.

\bibitem[Hu et~al.(2020)Hu, Fazlyab, Morari, and Pappas]{hu2020reach}
Haimin Hu, Mahyar Fazlyab, Manfred Morari, and George~J Pappas.
\newblock Reach-sdp: Reachability analysis of closed-loop systems with neural
  network controllers via semidefinite programming.
\newblock \emph{arXiv preprint arXiv:2004.07876}, 2020.

\bibitem[Jordan and Dimakis(2020)]{jordan2020exactly}
Matt Jordan and Alexandros~G Dimakis.
\newblock Exactly computing the local lipschitz constant of relu networks.
\newblock \emph{arXiv preprint arXiv:2003.01219}, 2020.

\bibitem[Latorre et~al.(2020)Latorre, Rolland, and
  Cevher]{latorre2020lipschitz}
Fabian Latorre, Paul Rolland, and Volkan Cevher.
\newblock Lipschitz constant estimation of neural networks via sparse
  polynomial optimization.
\newblock In \emph{International Conference on Learning Representations}, 2020.

\bibitem[Megretski and Rantzer(1997)]{megretski1997system}
Alexandre Megretski and Anders Rantzer.
\newblock System analysis via integral quadratic constraints.
\newblock \emph{IEEE Transactions on Automatic Control}, 42\penalty0
  (6):\penalty0 819--830, 1997.

\bibitem[Neyshabur et~al.(2017)Neyshabur, Bhojanapalli, McAllester, and
  Srebro]{neyshabur2017exploring}
Behnam Neyshabur, Srinadh Bhojanapalli, David McAllester, and Nati Srebro.
\newblock Exploring generalization in deep learning.
\newblock In \emph{Advances in Neural Information Processing Systems}, pages
  5947--5956, 2017.

\bibitem[Pauli et~al.(2021)Pauli, Koch, Berberich, Kohler, and
  Allgower]{pauli2021training}
Patricia Pauli, Anne Koch, Julian Berberich, Paul Kohler, and Frank Allgower.
\newblock Training robust neural networks using lipschitz bounds.
\newblock \emph{IEEE Control Systems Letters}, 2021.

\bibitem[Peck et~al.(2017)Peck, Roels, Goossens, and Saeys]{peck2017lower}
Jonathan Peck, Joris Roels, Bart Goossens, and Yvan Saeys.
\newblock Lower bounds on the robustness to adversarial perturbations.
\newblock In \emph{Advances in Neural Information Processing Systems}, pages
  804--813, 2017.

\bibitem[Raghunathan et~al.(2018)Raghunathan, Steinhardt, and
  Liang]{raghunathan2018semidefinite}
Aditi Raghunathan, Jacob Steinhardt, and Percy~S Liang.
\newblock Semidefinite relaxations for certifying robustness to adversarial
  examples.
\newblock In \emph{Advances in Neural Information Processing Systems}, pages
  10900--10910, 2018.

\bibitem[Sokoli{\'c} et~al.(2017)Sokoli{\'c}, Giryes, Sapiro, and
  Rodrigues]{sokolic2017robust}
Jure Sokoli{\'c}, Raja Giryes, Guillermo Sapiro, and Miguel~RD Rodrigues.
\newblock Robust large margin deep neural networks.
\newblock \emph{IEEE Transactions on Signal Processing}, 65\penalty0
  (16):\penalty0 4265--4280, 2017.

\bibitem[Virmaux and Scaman(2018)]{virmaux2018lipschitz}
Aladin Virmaux and Kevin Scaman.
\newblock Lipschitz regularity of deep neural networks: analysis and efficient
  estimation.
\newblock In \emph{Advances in Neural Information Processing Systems}, pages
  3835--3844, 2018.

\bibitem[Weng et~al.(2018{\natexlab{a}})Weng, Zhang, Chen, Song, Hsieh, Boning,
  Dhillon, and Daniel]{weng2018towards}
Tsui-Wei Weng, Huan Zhang, Hongge Chen, Zhao Song, Cho-Jui Hsieh, Duane Boning,
  Inderjit~S Dhillon, and Luca Daniel.
\newblock Towards fast computation of certified robustness for relu networks.
\newblock \emph{arXiv preprint arXiv:1804.09699}, 2018{\natexlab{a}}.

\bibitem[Weng et~al.(2018{\natexlab{b}})Weng, Zhang, Chen, Yi, Su, Gao, Hsieh,
  and Daniel]{weng2018evaluating}
Tsui-Wei Weng, Huan Zhang, Pin-Yu Chen, Jinfeng Yi, Dong Su, Yupeng Gao,
  Cho-Jui Hsieh, and Luca Daniel.
\newblock Evaluating the robustness of neural networks: An extreme value theory
  approach.
\newblock \emph{arXiv preprint arXiv:1801.10578}, 2018{\natexlab{b}}.

\bibitem[Wong and Kolter(2018)]{wong2018provable}
Eric Wong and Zico Kolter.
\newblock Provable defenses against adversarial examples via the convex outer
  adversarial polytope.
\newblock In \emph{International Conference on Machine Learning}, pages
  5286--5295. PMLR, 2018.

\bibitem[Yakubovich(1992)]{yakubovich1992nonconvex}
VA~Yakubovich.
\newblock Nonconvex optimization problem: The infinite-horizon linear-quadratic
  control problem with quadratic constraints.
\newblock \emph{Systems \& Control Letters}, 19\penalty0 (1):\penalty0 13--22,
  1992.

\bibitem[Yin et~al.(2020)Yin, Seiler, and Arcak]{yin2020stability}
He~Yin, Peter Seiler, and Murat Arcak.
\newblock Stability analysis using quadratic constraints for systems with
  neural network controllers.
\newblock \emph{arXiv preprint arXiv:2006.07579}, 2020.

\bibitem[Zames(1966)]{zames1966input}
George Zames.
\newblock On the input-output stability of time-varying nonlinear feedback
  systems part one: Conditions derived using concepts of loop gain, conicity,
  and positivity.
\newblock \emph{IEEE transactions on automatic control}, 11\penalty0
  (2):\penalty0 228--238, 1966.

\bibitem[Zhang et~al.(2018)Zhang, Weng, Chen, Hsieh, and
  Daniel]{zhang2018efficient}
Huan Zhang, Tsui-Wei Weng, Pin-Yu Chen, Cho-Jui Hsieh, and Luca Daniel.
\newblock Efficient neural network robustness certification with general
  activation functions.
\newblock In \emph{Advances in neural information processing systems}, pages
  4939--4948, 2018.

\end{thebibliography}

	\newpage
	
	\appendix

	\section{Proofs} \label{appendix: proofs}

	\subsection{Proof of Lemma \ref{lemma: local incremental qc single activation}} \label{appendix: proof of local incremental qc single activation} 
	
	Using the mean-value theorem on the interval $[x,y] \subseteq [\underline{x},\bar{x}]$, we have $	\varphi'(c) = \frac{\varphi(y)-\varphi(x)}{y-x}$
	for some $c \in (x,y)$.  Therefore, we can write
	%
	$\alpha \leq \inf_{c \in (x,y)} \ \varphi'(c) \leq  \frac{\varphi(y)-\varphi(x)}{y-x} \leq \sup_{c \in (x,y)} \ \varphi'(c) = \beta$.
	These inequalities can be written equivalently as
	$
	\lambda ( \frac{\varphi(y)-\varphi(x)}{y-x} \!-\! \alpha ) ( \frac{\varphi(y)-\varphi(x)}{y-x} \!-\! \beta ) \leq 0
	$, where $\lambda \geq 0$ is arbitrary. Therefore,
	\begin{align*}
		\begin{bmatrix}
			x-y \\ \varphi(x)-\varphi(y)
		\end{bmatrix}^\top  \begin{bmatrix}
			-2\alpha \beta \lambda & (\alpha+\beta) \lambda \\ (\alpha+\beta)\lambda & -2\lambda
		\end{bmatrix} \begin{bmatrix}
			x-y \\ \varphi(x)-\varphi(y)
		\end{bmatrix} \geq 0 \quad \forall x,y \in [\underline{x},\bar{x}].
	\end{align*}	

	\subsection{Proof of Lemma \ref{lemma: local incremental qc multiple activation}} \label{proof of local incremental qc multiple activation}
	By left- and right multiplying $Q$ by $\begin{bmatrix}
		(x-y)^\top & (\phi(x)-\phi(y))^\top
	\end{bmatrix}^\top$ and its transpose, respectively, we obtain
	\begin{align*}
		& \sum_{i=1}^{n} \lambda_{i} \left(-2\alpha_i \beta_i (x_i-y_i)^2+2(\alpha_i+\beta_i)(x_i-y_i)(\varphi_i(x_i)-\varphi_i(y_i))-2(\varphi_i(x_i)-\varphi_i(y_i))^2\right) \nonumber \\ &= \sum_{i=1}^{n} \lambda_{i} \begin{bmatrix}
			x_i-y_i \\ \varphi_i(x_i)-\varphi_i(y_i)
		\end{bmatrix}^\top \begin{bmatrix}
			-2\alpha_i \beta_i & \alpha_i+\beta_i \\ \alpha_i+\beta_i & -2
		\end{bmatrix}\begin{bmatrix}
			x_i-y_i \\ \varphi_i(x_i)-\varphi_i(y_i)
		\end{bmatrix} \geq 0.
	\end{align*}
	where the last inequality follows from Lemma \ref{lemma: local incremental qc single activation}.
	
	\subsection{Proof of Lemma \ref{lemma: ReLUQC}} \label{appendix: proof of RELUQC}
	By left- and right multiplying $Q$ by $\begin{bmatrix}
		(x-y)^\top & (\phi(x)-\phi(y))^\top
	\end{bmatrix}^\top$ and \\ $\begin{bmatrix}
		(x-y)^\top & (\phi(x)-\phi(y))^\top
	\end{bmatrix}^\top$, we obtain
	\begin{align*}
		&\sum_{i=1}^{n} \lambda_{i} \left(-2\alpha_i \beta_i (x_i-y_i)^2+2(\alpha_i+\beta_i)(x_i-y_i)(\varphi_i(x_i)-\varphi_i(y_i))-2(\varphi_i(x_i)-\varphi_i(y_i))^2\right) \nonumber \\
		&= \sum_{i \in \mathcal{I}^{+}} \lambda_i \underbrace{(-2 \alpha_i \beta_i (x_i-y_i)^2 + 2 (\alpha_i+\beta_i)(x_i-y_i)^2-2(x_i-y_i)^2)}_{=0\ \text{( since $\alpha_i=\beta_i=1$)}} \notag \\
		&+ \sum_{i \in \mathcal{I}^{-}} \lambda_i \underbrace{(-2 \alpha_i \beta_i (x_i-y_i)^2 + 2 (\alpha_i+\beta_i)(x_i-y_i)(0-0)-2(0-0)^2)}_{=0 \ \text{( since $\alpha_i=\beta_i=0$)}} \notag \\
		&+ \sum_{i \in \mathcal{I}^{\pm}} \lambda_i \underbrace{\begin{bmatrix}
				x_i-y_i \\ \phi_i(x_i)-\phi_i(y_i)
			\end{bmatrix}^\top \begin{bmatrix}
				-2\alpha_i \beta_i & \alpha_i+\beta_i \\ \alpha_i+\beta_i & -2
			\end{bmatrix}\begin{bmatrix}
				x_i-y_i \\ \phi_i(x_i)-\phi_i(y_i)
		\end{bmatrix}}_{\geq 0},
	\end{align*}
	where the last inequality follows from the fact that $\alpha_i  \leq \frac{\phi_i(x_i)-\phi_i(y_i)}{x_i-y_i} \leq \beta_i$ when $i \in \mathcal{I}^{\pm}$. 
	
	\subsection{Proof of Theorem \ref{thm: main theorem} \label{appendix: proof of main theorem}}

	Suppose $M(\rho,Q_0,\cdots,Q_{\ell-1}) \preceq 0$ for some $(\rho,Q_0,\cdots,Q_{\ell-1}) \in \times \mathcal{Q}_0 \times \cdots \times \mathcal{Q}_{\ell-1}$. By left- and right- multiplying both sides of \eqref{eq: thm: main theorem 1} by $(\bx-\by)^\top$ and $(\bx-\by)$, respectively, we obtain
	\begin{align*}
		\sum_{k=0}^{\ell-1} \begin{bmatrix}
			W_k (x_k-y_k) \\ x_{k+1}-y_{k+1}
		\end{bmatrix}^\top Q_k \begin{bmatrix}
			W_k (x_k-y_k) \\ x_{k+1}-y_{k+1}
		\end{bmatrix} - \begin{bmatrix}
			x_0 - y_0 \\ f(x_0)-f(y_0)
		\end{bmatrix}^\top Q_f \begin{bmatrix}
			x_0 - y_0 \\ f(x_0)-f(y_0)
		\end{bmatrix}\leq 0.
	\end{align*}
	By assumption, for each $k=0,\cdots,\ell-1$, the function $x \mapsto \phi_k \circ (W_k x + b_k)$ satisfies the local incremental quadratic constraint defined by $(\mathcal{D}_k,\mathcal{Q}_k)$. Hence, all summands are non-negative, and as a result,
	\[
	\begin{bmatrix}
		x_0 - y_0 \\ f(x_0)-f(y_0)
	\end{bmatrix}^\top Q_f \begin{bmatrix}
		x_0 - y_0 \\ f(x_0)-f(y_0)
	\end{bmatrix} \geq 0 \quad \forall x_0,y_0 \in \mathcal{C}_0.
	\]

	\section{Details of Algorithm \ref{alg:prebounds}}\label{apdx:prebound-proof}
	By defining $z_k = W_k x_k+b_k$ as the input of the $k$-th hidden layer, the iterations of the neural network can be written as
	\begin{equation*}
		z_{k}=W_k\phi(z_{k-1})+b_k
	\end{equation*}
	Denote $W_k^+=\frac{W_k+|W_k|}{2}$ which contains only the positive elements of $W_k$ and $W_k^-=\frac{W_k-|W_k|}{2}$ where contains the negative elements. Therefore, 
	\begin{equation*}
		z_{k}=W_k^+\phi(z_{k-1})+W_k^-\phi(z_{k-1})+b_k.
	\end{equation*}
	On the other hand,
	\begin{equation*}
		\operatorname{diag}(\alpha_L^k)z_{k-1}+\alpha_L^k \circ \beta_L^k \leq \phi(z_{k-1}) \leq \operatorname{diag}(\alpha_U^k)z_{k-1}+\alpha_U^k \circ \beta_U^k,
	\end{equation*}
	which implies
	\begin{equation*}
		\begin{aligned}
			&W_k^+\left(\operatorname{diag}(\alpha_L^k)z_{k-1}+\alpha_L^k \circ \beta_L^k\right) \leq W_k^+\phi(z_{k-1}) \leq W_k^+\left(\operatorname{diag}(\alpha_U^k)z_{k-1}+\alpha_U^k \circ \beta_U^k\right),\\
			&W_k^-\left(\operatorname{diag}(\alpha_U^k)z_{k-1}+\alpha_U^k \circ \beta_U^k\right) \leq W_k^-\phi(z_{k-1}) \leq W_k^-\left(\operatorname{diag}(\alpha_L^k)z_{k-1}+\alpha_L^k \circ \beta_L^k\right).
		\end{aligned}
	\end{equation*}
	If we define $\underline{C}_k,\underline{d}_k, \overline{C}_k$ and $\overline{d}_k$ as \eqref{eq:matrixdefinition} then we conclude,
	\begin{equation*}
		\underline{C}_k z_{k-1}+\underline{d}_k \leq z_{k} \leq \overline{C}_k z_{k-1}+\overline{d}_k.
	\end{equation*}
	Therefore to find $u^{k}$ and $l^{k}$ we need to solve two different linear programs, which respectively maximize and minimize a linear objective function $f^{\top} z_{k-1}$ such that $z_{k-1}\in [l^{k-1},\ u^{k-1}]$. This linear program has analytical solutions provided in the $\mathbf{for}$ loop embedded inside Algorithm \ref{alg:prebounds}. For the first hidden layer of the neural network, there is no activation function on the input layer so to find the reachable set of inputs in first layer, $\mathcal{D}_0$, we assume linear activation function for input layer, such that, $x_0\in\mathcal{C}_0$ is considered as its output and compute $\mathcal{D}_0$. In this case $\alpha_L^0=\alpha_U^0=\mathbf{1}_{n_0}$ and $\beta_L^0=\beta_U^0=\mathbf{0}_{n_0}$.

\end{document}